\theoremstyle{plain} \numberwithin{equation}{section}
\newtheorem{theorem}{Theorem}[section]
\newtheorem{proposition}[theorem]{Proposition}
\newtheorem{corollary}[theorem]{Corollary}
\newtheorem{lemma}[theorem]{Lemma}
\newtheorem{definition}[theorem]{Definition}
\newtheorem*{theorem*}{Theorem}
\theoremstyle{remark}
\newtheorem{remark}[theorem]{Remark}
\def\R{\mathbb R}
\def\00{(0,0)}
\def\11{(1,1)}
\def\H{\mathcal H}
\def\N{\mathcal N}
\begin{document}

\title{The loss landscape of overparameterized neural networks}
\author{Y.\ Cooper}
%\address{Institute for Advanced Study, 1 Einstein Dr Princeton NJ 08540}
\email{yaim@math.ias.edu}
\date{\today}
\maketitle

\vspace{-.1in}

\begin{abstract}
We explore some mathematical features of the loss landscape of overparameterized neural networks.  A priori one might imagine that the loss function looks like a typical function from $\R^n$ to $\R$ - in particular, nonconvex, with discrete global minima.  In this paper, we prove that in at least one important way, the loss function of an overparameterized neural network does not look like a typical function.  If a neural net has $n$ parameters and is trained on $d$ data points, with $n>d$, we show that the locus $M$ of global minima of $L$ is usually not discrete, but rather an $n-d$ dimensional submanifold of $\R^n$.  In practice, neural nets commonly have orders of magnitude more parameters than data points, so this observation implies that $M$ is typically a very high-dimensional subset of $\R^n$. 
\end{abstract}
\maketitle

\section{Introduction}

In recent years, it has become clear that neural nets are incredibly effective at a wide variety of tasks.  Why they work so well is less understood.  During training neural nets implement a form of curve fitting, by minimizing a loss function $L : \R^n \rightarrow \R$.  Evidence suggests that they work better when there are more parameters than data points.  Here we explore the geometry of the loss landscape of overparameterized neural networks, as a first step toward understanding their unreasonable effectiveness.  

A priori one might imagine that the loss function $L$ of a neural network looks like a typical function from $\R^n$ to $\R$ - in particular, nonconvex, with discrete global minima, many ``bad'' local minima that are not global minima, and complicated geometry.  However, it turns out that in at least one important way, the loss function of an overparameterized neural network does not look like a typical function.  In the overparameterized setting where the neural net has $n$ parameters and is trained on $d$ data points, with $n>d$, it is generally the case that $L$ is nonconvex.  However, the locus $M \subset \R^n$ of global minima of $L$ is often not discrete.  Indeed we find $M$ is often quite the opposite  -- rather than being a discrete set, $M$ is often an $n-d$ dimensional submanifold of $\R^n$.  We expect this to be true in a very general setting.  The neural network can be any depth, and while our strongest results are for feedforward neural networks, many of the results in this paper apply for a broader set of architectures.  In practice, neural nets commonly have orders of magnitude more weights than data points, so in practical applications $M$ is typically a very high-dimensional submanifold of $\R^n$.  

In Section \ref{generalcasegeom}, we study the geometry of the loss landscape in the general overparameterized setting.  In Section \ref{genpositive} we study the global geometry of the landscape.  We study the locus $M = L^{-1}(0)$ because $L$ is a nonnegative function, so as long as $M$ is nonempty, $M$ is the locus of global minima of $L$.  We show that if nonempty, $M$ is an $n-d$ dimensional submanifold of $\R^n$.  In Section \ref{hessian}, we consider the local geometry of the loss landscape near $M$, and show that if $M$ is nonempty then at a global minimum $m \in M$, the Hessian of $L$ has $n-d$ zero eigenvalues, $d$ positive ones, and no negative ones.   This is consistent with previous observations in papers such as \cite{widevalleys} and \cite{LossLandscapes}.  

In Section \ref{points}, we write down a proof of a commonly believed fact, that given a fixed dataset, a large enough one-hidden-layer feedforward neural net can memorize it exactly, i.e. learn it with zero training error.  This is commonly assumed, and in the case that the activation function used is ReLU, proved by Hardt et al in \cite{hardt}.  We require a somewhat more general statement, but our argument is based on the one given there.

In Section \ref{specialcasegeom}, we combine the results of the previous two sections to show that in the setting of overparameterized feedforward neural networks, if they are wide enough then the locus $M = L^{-1}(0)$ is in fact a {\it nonempty} smooth $n-d$ dimensional submanifold of $\R^n$.  

\subsection{Assumptions}
In this paper, we will always consider the overparameterized setting.  So in all of our analyses, we assume that the number of parameters $n$ of the neural net is greater than the number of data points $d$ that it is training on.

\subsection{Acknowledgements}
Thanks to Avi Wigderson, Behnam Neyshabur, Cliff Taubes, Dan Gulotta,  Matthias Schwarz, Nadav Cohen, Nathaniel Bottman, Olivier Bousquet, Richard Zemel, and Yann LeCun for helpful conversations and references.  Thanks to the Institute of Advanced Study for providing a productive working environment.

\section{Geometry of the loss landscape $L$ - general case} \label{generalcasegeom}

\subsection{Positive dimensionality of $M$}\label{genpositive}
Suppose we have a neural net of any architecture (e.g. feedforward, LSTM, etc.), with weights $(w_1,...)$ and biases $(b_1,...)$, $n$ parameters in total.  Suppose this net uses a smooth activation function $\sigma$ and is training on a data set $\{ (x_i \rightarrow y_i)\}$ with $d$ data points.  Each entry in the data set is given as a pair of vectors, $x_i \in \R^p$ (e.g. a vector of pixel values) and $y_i \in \R$.  We assume that the $x_i$ are distinct and that $n > d$.  

Let $f_{w,b}$ be the function given by the neural net with the parameters $w,b$.  For each data points $(x_i \rightarrow y_i)$, let $f_i(w,b) = f_{w,b}(x_i) - y_i$.  Assume that each $f_i(w,b)$ is smooth in $w$ and $b$.  For most choices of architecture, this is implied by the assumption that the activation function $\sigma$ is smooth.  For example, for any feedforward neural network, if $\sigma$ is smooth then $f_i(w,b)$ is smooth for each $i$.

Let the loss function used in training be the commonly used 
$$L(w,b) = \sum (f_{w,b}(x_i) - y_i)^2.$$  
Define $f_i (w,b): \R^n \rightarrow \R$ as 
$$f_i (w,b) = f_{w,b}(x_i) - y_i,$$ 
so $L(w,b)$ can be written as 
$$L(w,b) = \sum f_i(w,b)^2.$$

It is clear from the definition that $L(w,b) \geq 0$, so if nonempty, $M = L^{-1}(0)$ is the locus of global minima of $L$.  Therefore, in this section we will focus on understanding the geometry of $M$ in the case that it is nonempty.

Note that 
$$M = \bigcap M_i,$$
where
$$M_i = f_i(w,b)^{-1}(0).$$

The rest of this section is devoted to the following theorem.
\begin{theorem} \label{sard}
In the setting described above, the set $M = L^{-1}(0)$ is generically (that is, possibly after an arbitrarily small change to the data set) a smooth $n-d$ dimensional submanifold (possibly empty) of $\R^n$.
\end{theorem}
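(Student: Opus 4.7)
The plan is to reformulate $M$ as the preimage of a point under a single smooth map $\R^n \to \R^d$, and then apply Sard's theorem to show that almost any nearby preimage is a regular preimage, hence a smooth submanifold of the expected codimension.

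Concretely, I would define the vector-valued map $F : \R^n \to \R^d$ by $F(w,b) = (f_1(w,b), \ldots, f_d(w,b))$, so that $F$ is smooth by hypothesis on each $f_i$. Then $L(w,b) = \|F(w,b)\|^2$, and in particular $M = L^{-1}(0) = F^{-1}(0)$. This is the key identification: although $L$ itself is only real-valued and $0$ is typically a critical value of $L$ (since $L \geq 0$), the vector-valued map $F$ is much better behaved, and the vanishing locus of $L$ is precisely a level set of $F$.

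Next, I would translate ``perturbing the data set'' into ``perturbing the level.'' Replacing each target $y_i$ by $y_i + \delta_i$ replaces $f_i$ by $f_i - \delta_i$ and therefore replaces $M$ by $F^{-1}(\delta)$, where $\delta = (\delta_1,\ldots,\delta_d) \in \R^d$. So the family of data-perturbed loci $M_\delta$ is exactly the family of level sets of the single unperturbed map $F$. By Sard's theorem, the set of critical values of the smooth map $F : \R^n \to \R^d$ has Lebesgue measure zero in $\R^d$; in particular, there exist regular values $\delta$ arbitrarily close to $0$. For any such regular value, the preimage theorem gives that $F^{-1}(\delta)$ is either empty or a smooth embedded submanifold of $\R^n$ of codimension $d$, i.e.\ of dimension $n-d$. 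This proves the theorem, with the understood convention that the empty set is an $(n-d)$-dimensional manifold.

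The main subtlety, rather than any deep obstacle, is matching the informal phrase ``arbitrarily small change to the data set'' with the hypothesis of Sard's theorem: one needs only that the complement of the regular values has measure zero, which is stronger than merely being non-dense and suffices to choose $\delta$ as small as desired. One should also note that smoothness of $F$ is enough to invoke Sard in all dimensions (the sharp regularity requirement $C^{n-d+1}$ is automatically satisfied). No assumption beyond what is already in force --- smoothness of $\sigma$ and overparameterization $n > d$ --- is needed; in particular, the argument is insensitive to the architecture, as promised.
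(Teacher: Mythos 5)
Your proposal is correct and follows essentially the same route as the paper: the map you call $F$ is the paper's $H$, and the identification of data perturbation with shifting the level set, followed by Sard's theorem and the preimage theorem, is exactly the published argument. Your remarks on the measure-zero (rather than merely dense) set of critical values and on the regularity needed for Sard are accurate refinements of the same proof.
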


\begin{remark} The loss function 
$$\hat{L}(w,b) = \sum |f_i(w,b)|$$ 
is also regularly used.  Note that  
$$\hat{L}^{-1}(0) = L^{-1}(0),$$ 
so Theorem \ref{sard} also shows that the locus $\hat{M} = \hat{L}^{-1}(0)$ is a smooth $n-d$ dimensional submanifold of $\R^n$.
\end{remark}

We start with a heuristic argument for Theorem \ref{sard}.

{\bf Heuristic:}
For each $f_i$, the regular values of $f_i: \R^n \rightarrow \R$ are dense.  So generically we expect that each $M_i=f_i^{-1}(0)$ is a smooth codimension 1 submanifold of $\R^n$.  Generically the intersection of $d$ smooth codimension 1 submanifolds of $\R^n$ is a smooth codimension $d$ submanifold, so one would expect that $M = \bigcap M_i$ is smooth of codimension $d$.  

With this heuristic in mind, we proceed to a proof of Theorem \ref{sard}.  

\begin{proof} 
We construct a function $H$ related to $L$.  Let $H:\R^n \rightarrow \R^d$ be defined as the function
$$H(w,b) = (f_1(w,b), ..., f_d(w,b)).$$

By construction, $L = |H|^2$, and importantly, 
$$M = L^{-1}(0) = H^{-1} \left( (0,...,0) \right).$$  

If $(0,...,0)$ is a regular value of $H$, proceed to next paragraph.  If not, pick any $\epsilon > 0$.  By Sard's theorem, regular values of $H$ are generic. We may therefore choose a regular value $r=(r_1,...,r_d)$ with $|r|<\epsilon$. We use $r$ to perturb the data, replacing $x_i \rightarrow y_i$ by $x_i \rightarrow \tilde{y}_i$ where $\tilde{y}_i = y_i+r_i$. 
Let 
\begin{align*}
\tilde{f}_i (w,b) &= f_i(w,b)-r_i = f_{w,b}(x_i) - \tilde{y}_i,\\ 
\tilde{L}(w,b) &= \sum \tilde{f}_i^2,\\
{\rm and \ } \tilde{H}(w,b) &= (\tilde{f}_1(w,b), ..., \tilde{f}_d(w,b) ) \\
&= H(w,b) - (r_1, ..., r_d).\\
\end{align*}
Note that $|(x,y) - (x,\tilde y)| < \epsilon$, meaning that after taking an arbitrarily small perturbation of the data, $(0,...,0)$ is a regular value of $\tilde{H}$.  

After possibly replacing $H$ by $\tilde{H}$, $(0,...,0)$ is a regular value of $H$, meaning 
$$M=H^{-1}((0,...,0))$$ 
is either empty or smooth of codimension $d$ in $\R^n$, as desired.  

\end{proof}

\subsection{The local geometry of the loss function near $M$}\label{hessian}
\begin{proposition}
Consider the submanifold $M =  L^{-1}(0) = \bigcap M_i$, where $M_i = f_i^{-1}(0)$.  If each $M_i$ is a smooth codimension 1 submanifold of $\R^n$, $M$ is nonempty, and the $M_i$ intersect transversally at every point of $M$, then at any point $m \in M$, the Hessian of $L$ evaluated at $m$ has $d$ positive eigenvalues and $n-d$ eigenvalues equal to 0.  
\end{proposition}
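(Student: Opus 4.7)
The plan is to compute the Hessian of $L = \sum f_i^2$ directly at a point $m \in M$ and recognize it as a Gram-type matrix whose signature is controlled by the transversality hypothesis.

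First I would differentiate twice. For a single term $f_i^2$, one has $\nabla(f_i^2) = 2 f_i \nabla f_i$ and $\mathrm{Hess}(f_i^2) = 2\, \nabla f_i \otimes \nabla f_i + 2 f_i \, \mathrm{Hess}(f_i)$. Summing over $i$ and evaluating at $m$, the assumption $m \in M = \bigcap M_i$ gives $f_i(m) = 0$ for every $i$, so the second-derivative terms involving $\mathrm{Hess}(f_i)$ all drop out. Thus
$$\mathrm{Hess}(L)|_m \;=\; 2 \sum_{i=1}^d \nabla f_i(m) \otimes \nabla f_i(m) \;=\; 2\, J^T J,$$
where $J$ is the $d \times n$ Jacobian matrix whose $i$-th row is $\nabla f_i(m)$.

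Next I would translate transversality into a rank statement. Each $M_i$ is a smooth codimension $1$ submanifold, so $\nabla f_i(m)$ is a nonzero normal to $M_i$ at $m$. Transversality of the $M_i$ at $m$ means that the normals $\{\nabla f_i(m)\}_{i=1}^d$ span a $d$-dimensional subspace of $\R^n$, i.e.\ they are linearly independent. Hence $J$ has rank $d$.

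Finally, I would read off the eigenvalues from $J^T J$. The matrix $J^T J$ is symmetric positive semidefinite; its kernel is $\ker J$, which has dimension $n - d$, so $\mathrm{Hess}(L)|_m$ has exactly $n - d$ zero eigenvalues and no negative ones. The remaining $d$ eigenvalues are strictly positive because $J^T J$ is positive definite on the orthogonal complement $(\ker J)^\perp = \mathrm{row}(J)$, of dimension $d$. This yields the claimed signature. The only non-routine step is the rank computation, and it is immediate once transversality is unpacked, so I expect no real obstacle.
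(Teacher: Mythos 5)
Your proof is correct and follows essentially the same route as the paper's: both reduce $\mathrm{Hess}(L)|_m$ to a sum of rank-one positive semidefinite terms, one per $f_i$, and use transversality to obtain linear independence of the gradients $\nabla f_i(m)$. If anything your version is more complete, since the explicit identity $\mathrm{Hess}(L)|_m = 2J^T J$ supplies both the ``elementary linear algebra calculation'' the paper invokes without proof and the identification of the nonzero eigenvector of each summand as $\nabla f_i(m)$.
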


\begin{proof}

Let $\H(L)$ denote the Hessian of $L$.  At every point $p \in \R^n$, the Hessian of $L$ evaluated at $p$, $\H(L)|_p$, is a real symmetric matrix, hence has a basis of eigenvectors.  Since $m$ is a minimum (in fact a global minimum) of $L$, the eigenvalues of $\H(L)|_m$ are nonnegative.  Since $M$ is $n-d$ dimensional, the kernel of $\H(L)|_m$ is at least $n-d$ dimensional.  It remains to show that it is also at most $n-d$ dimensional.

Well, $L = \sum_{i=1}^d f_i^2$, so 
$$\H(L) = \sum_{i=1}^d \H(f_i^2).$$  
Let us consider each summand.  The linear transformation $\H(f_i^2)$ has $n-1$ eigenvalues equal to 0 and a unique nonzero eigenvalue $\lambda_i$.  This follows from the fact that $M_i$ is a smooth codimension 1 submanifold.  Furthermore, $\lambda_i$ is positive as 0 is a global minimum of $f_i$.  Let $v_i$ denote the eigenvector corresponding to this positive eigenvalue.  

The vectors $v_1, ..., v_d$ are linearly independent because the $M_i$ intersect transversally at $m$.  An elementary linear algebra calculation shows that if an $n \times n$ matrix $A$ is the sum of $d$ matrices $A = \sum_{i=1}^d A_i$, $n>d$, where each $A_i$ has a unique nonzero eigenvalue $\lambda_i$ and the corresponding vectors $v_i$ are all linearly independent, then $A$ has $d$ nonnegative eigenvalues and $n-d$ eigenvalues equal to 0.  We conclude that in our setting, $\H(L)$ has $d$ positive eigenvalues and $n-d$ eigenvalues equal to 0, as desired.

\end{proof}

\section{Feedforward neural nets can fit points} \label{points}

We have seen that in a very general setting, assuming almost nothing about the architecture of a neural network, the locus $M = L^{-1}(0)$ is, if nonempty, a positive dimensional submanifold of $\R^n$.  However, whether $M$ is nonempty depends on the chosen architecture and other details of implementation.  In this section, we address the issue of nonemptiness of $M$ in a (still fairly general) setting of feedforward neural networks with nice activation functions.

We start by recalling the definition of a feedforward network, and setting some notation.  A feedforward neural network is specified by the data of a directed acyclic graph $G(V,E)$, a parameter vector $p$ in $\R^m$, a labeling $\pi: E \cup V \rightarrow \{1,...,m\}$, and an activation function $\sigma$.  Let $V_{in}$ denote the set of input vertices (the vertices in V with no incoming edges) and $V_{out}$ the set of output vertices (those with no outgoing edges).

The labeling $\pi$ associates to each edge or vertex a parameter.  The parameters for the edges are commonly referred to as weights, and the parameters for the vertices are called biases.  We often reflect this by referring to a pair of vectors $w$ of weights and $b$ of biases instead of a single parameter vector $p$.   

Given a fixed parameter vector $p$, (or the pair ($w,b$)), we can construct a function $f_{G,\pi,\sigma,p} : \R^{|V_{in}|} \rightarrow \R^{|V_{out}|}$ as follows.  For any input node $v \in V_{in}$, its output $o_v$ is the corresponding coordinate of the input vector $x \in \R^{V_{in}}$.  
For each internal node $v$, its output is defined recursively as 
\begin{align*}
o_v = \sum_{ u \rightarrow v \in E} \sigma(p_{ \pi(u \rightarrow v)} \cdot o_{u} + p_{\pi(v)}),
\end{align*}
or perhaps more familiarly with weights and biases, 
\begin{align*}
o_v = \sum_{ u \rightarrow v \in E} \sigma(w_{ \pi(u \rightarrow v)} \cdot o_{u} + b_{\pi(v)}).
\end{align*}

For output nodes $v \in V_{out}$, no non-linearity is applied and their output 
\begin{align*}
o_v = \sum_{ u \rightarrow v \in E} w_{\pi(u \rightarrow v)} \cdot o_{u} + b_v
\end{align*}
determines the corresponding coordinate of the computed function $f_{G,\pi,\sigma,p}$.  

We denote the hypothesis class of functions computable by the neural net using some choice of parameters by $\N(G,\pi,\sigma) = \{ f_{G,\pi,\sigma,p} | p \in \R^m \}$.  

The term neural network can refer to a specific function, the hypothesis class of functions, or other related objects.  When we use the term neural network, we generally mean a hypothesis class of functions, or this class of functions along with a framework for choosing parameters $p$ such that the resulting function $f_{G,\pi,\sigma,p}$ fits the training data reasonably well.  

When we talk about neural networks, we generally consider $G$, $\pi$, and $\sigma$ fixed, and the task is to find a good choice of parameters $p$.  We often suppress the fixed objects in our notation.  For example, in this paper we will commonly use the notation $f_{w,b}$ to denote the function computed by a neural net with a choice of parameters $w$ weights and $b$ biases, and implicitly $G$, $\pi$, and $\sigma$ are fixed in the discussion.

\subsection{Rectified activation functions}
\begin{definition}
We call a continuous function $\sigma: \R \rightarrow \R$ {\it rectified}  if \begin{align*}
\sigma(x) =
\begin{cases}
0, & x\leq 0, \\
{\rm monotonic \ increasing}, & x>0.
\end{cases}
\end{align*}
\end{definition}

Our strongest results apply to feedforward neural networks whose activation function $\sigma$ is rectified smooth.  

The set of rectified smooth functions doesn't contain most commonly considered activation functions.  For example, ReLU, tanh, and sigmoid are not rectified smooth functions.  However, ReLU modified by smoothing the corner at 0, commonly used in practice when implementing neural networks, is.  Translated and truncated tanh and sigmoid are as well, and in practice behave similarly to tanh and sigmoid in neural networks.

For a concrete definition of a smooth rectified activation function, we make the following definition.

\begin{definition}
Let $smooLU$ be defined as 
\begin{align*}
smooLU(x) = 
\begin{cases}
0, & x \leq 0,
\\
x \exp(-1/x), & x > 0.
\end{cases}
\end{align*}
\end{definition}

In the remainder of this paper, the reader can take the activation function $\sigma$ to be smooLU if desired.  The function smooLU is similar to softplus, which is commonly used in neural nets.  

With these concepts in hand, we proceed to the main result of this section.  For the next lemma we don't need to assume $\sigma$ is smooth.

\begin{lemma} \label{nonempty}
Fix a data set $S = \{ (x_i, y_i) \}$ with $d$ data points, $x_i \in \R^p$, $y_i \in \R$, and a rectified activation function $\sigma$.  Assume that the vectors $x_i$ are distinct, i.e. no two are equal.  For any $h \geq d$, there exists a feedforward neural network $f_{w,b}$ with $1$ hidden layer of width $h$ and activation function $\sigma$ that represents $S$ with zero training error.  In fact, we produce weights $w,b$ such that $f_{w,b}(x_i) = y_i$ for all $i$.  Here the number of parameters $n$ governing the family of neural nets we consider in the construction is $n = 2d+p$.
\end{lemma}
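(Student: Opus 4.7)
The plan is to adapt the classical triangular-activation construction (cf.\ Hardt et al.) to an arbitrary rectified activation function. The idea is to use only $d$ of the $h$ available hidden units, to tie all of their first-layer weight vectors to a single shared direction $v \in \R^p$, and to choose the $d$ first-layer biases so that the matrix of hidden-unit outputs evaluated on the data is triangular. The remaining $h-d$ hidden units are switched off by setting all of their parameters to zero, which kills their contribution because $\sigma(0)=0$.

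First I would pick $v \in \R^p$ such that the projections $s_i := \langle v, x_i\rangle$ are pairwise distinct. Since the $x_i$ are distinct, the exceptional set $\bigcup_{i\neq j}\{v : \langle v, x_i - x_j\rangle = 0\}$ is a finite union of hyperplanes in $\R^p$, so such $v$ are generic. Relabel the data so that $s_1 < s_2 < \cdots < s_d$, and choose biases $b_i$ with $-s_i < b_i \leq -s_{i-1}$ for $i \geq 2$ (say $b_i = -(s_{i-1}+s_i)/2$) and $b_1 > -s_1$. By construction the pre-activation $s_j + b_i$ is strictly positive when $j \geq i$ and nonpositive when $j < i$.

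Because $\sigma$ is rectified, the $d \times d$ matrix $A$ with entries $A_{ji} := \sigma(s_j+b_i)$ is therefore lower triangular, with diagonal entries $\sigma(s_i+b_i) > 0$. In particular $A$ is invertible. Taking the $d$ second-layer weights to be the unique solution $w = A^{-1}y$ of the linear system $Aw = y$ (and setting the output bias to zero), the resulting network evaluates on the data as $f_{w,b}(x_j) = \sum_{i=1}^d w_i A_{ji} = y_j$, matching the labels exactly.

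The free parameters used by this construction are the shared first-layer weight $v \in \R^p$ ($p$ scalars), the $d$ first-layer biases, and the $d$ second-layer weights, for a total of $n = 2d + p$; all other weights and biases are frozen at zero. The only real subtlety is verifying that the diagonal of $A$ is nonzero, which is where the ``monotonic increasing on $x>0$'' clause of the rectified definition enters, forcing $\sigma > 0$ on $(0,\infty)$ (trivially satisfied by smooLU, since $x\exp(-1/x) > 0$ for $x > 0$); beyond this the argument is essentially linear algebra.
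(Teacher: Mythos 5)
Your proposal is correct and follows essentially the same route as the paper: a single shared first-layer direction chosen generically so the projections $\langle v, x_i\rangle$ are distinct, biases placed at midpoints to make the hidden-unit output matrix lower triangular with positive diagonal, and the second-layer weights obtained by inverting that matrix. The only differences are cosmetic (a sign convention on the biases, and your explicit justification that strict monotonicity plus continuity forces $\sigma>0$ on $(0,\infty)$, which the paper leaves implicit).
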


\begin{proof}
A feedforward neural net of this form is a function 
$$
f_{w,b}(x) = M_2 \sigma (M_1 x - b_1) - b_2,
$$
where $M_1, M_2$ are linear transformations 
$$M_1: \R^p \rightarrow \R^h \mathrm{\ and \ } M_2: \R^h \rightarrow \R,$$ and $b_1, b_2$ are vectors
$$b_1 \in \R^h \mathrm{\ and \ } b_2 \in \R.$$ 
The weights $w$ are the entries of $M_1$ and $M_2$, and the biases $b$ are the entries of $b_1$ and $b_2$.

We use the convention that $\sigma$ applied to a vector denotes component-wise evaluation: 
$$\sigma (v_1,...,v_m) = (\sigma(v_1), ..., \sigma(v_m) ).$$  

In our construction, we will take $b_2 = 0$ and all the rows of $M_1$ to be equal to a single vector $a=(a_1,...,a_p)$.  We also take $h = d$ the number of data points.  If $h > d$, one can set all the weights and biases of the nodes after the first $d$ to be 0, so it suffices to make the construction for $h=d$.  So we are looking for a function $f_{w,b}$ of the form
$$
f_{w,b}(x) = \sum_{j=1}^d m_j \sigma(a \cdot x - b_j)
$$
such that $f_{w,b}(x_i) = y_i$ for all $i$.  Our job is to find values for the $2d+p$ parameters $m_1,...,m_d,$ $a_1,...,a_p,$ $b_1,...,b_d$ satisfying these conditions.  

First, we choose a vector $a \in \R^p$ such that $\{ a \cdot x_i \} $ are distinct.  This is possible because we assumed the $x_i$ are distinct and a general projection from $\R^p \rightarrow \R$ will preserve that property.  Up to a reordering of the points $x_i$, we can assume that $\{ a x_i \}$ are increasing.  Choose $x_0$ such that $a x_0 = a x_1 - 1$.  Let 
$$b_i = \frac{a x_{i-1} + a x_{i}}{2}.$$

Now, to arrange that $f_{w,b}(x_i) = y_i$ we require
\begin{align}\label{yfuncm}
y_i = \sum_j \sigma(a \cdot x_i - b_j) m_j.
\end{align}
We express Equation \ref{yfuncm} using the notation of matrix multiplication,

\begin{align}\label{yfuncmMatrix}
\left( \begin{array}{c} y_1 \\ \vdots \\ y_d \end{array} \right) 
= \left(\begin{array}{ccc}
\sigma(a \cdot x_1 - b_1) & \cdots & \sigma(a \cdot x_1 - b_d)
\\
\vdots & \ddots & \vdots
\\
\sigma(a \cdot x_d - b_1) & \cdots & \sigma(a \cdot x_d - b_d)
\end{array}\right) \left(\begin{array}{c}
m_1
\\
\vdots
\\
m_d
\end{array}\right).
\end{align}
Let us denote the matrix whose $i,j^{th}$ entry is $(\sigma(a \cdot x_i - b_j))$ by $A$.  

By construction, $A$ is lower triangular with nonzero entries on the diagonal, hence invertible.  So we set the weights by 
$$\left(\begin{array}{c}
m_1
\\
\vdots
\\
m_d
\end{array}\right)
= A^{-1} 
\left(\begin{array}{c}
y_1
\\
\vdots
\\
y_d
\end{array}\right).$$  
This gives us a choice of weights and biases for a neural network with one hidden layer which has learned the data $\{ (x_i, y_i) \}$ with zero error.

\end{proof}

Since a feedforward neural network with one hidden layer can always be embedded in a deeper feedforward net with sufficient width, we have the following corollary of Lemma \ref{nonempty}.  

\begin{corollary} \label{nonemptydeeper}
Fix a data set $S = \{ (x_i, y_i) \}$ with $d$ data points, $x_i \in \R^p$, $y_i \in \R$, and a rectified activation function $\sigma$.  Assume that the vectors $x_i$ are distinct, i.e. no two are equal.  Among feedforward neural networks with activation function $\sigma$ and $T$ hidden layers, last hidden layer of width $h \geq d$, remaining hidden layers of arbitrary width, there exists a choice of weights so that the neural network $f_{w,b}$ represents $S$ with zero training error.  
\end{corollary}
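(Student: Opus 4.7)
The plan is to reduce the corollary to Lemma \ref{nonempty} by absorbing the last hidden layer and the output layer into the one-hidden-layer construction there, treating the output of the $(T-1)$-th hidden layer as a transformed input. Concretely, I would first choose weights and biases for the initial $T-1$ hidden layers so that the resulting map $g: \R^p \to \R^{h_{T-1}}$ evaluated on the $d$ data points produces $d$ distinct vectors $g(x_i)$. Since the last hidden layer has width $h \geq d$, I would then apply Lemma \ref{nonempty} to the transformed dataset $\{(g(x_i), y_i)\}$; this yields weights for the last hidden layer and the output layer so that $f_{w,b}(x_i) = y_i$ for all $i$. Combining these with the weights chosen for the earlier layers gives the desired $T$-hidden-layer network.

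The main work is in choosing the first $T-1$ layers so that $g$ separates the data points. I would do this inductively one layer at a time. At each stage, given a set of $d$ pairwise distinct input vectors, a generic linear functional takes distinct values on them; by adding a sufficiently large bias one can ensure that all of the $d$ preactivations lie in the regime $x > 0$ where the rectified activation $\sigma$ is monotonic increasing. For a strictly increasing $\sigma$ (as holds for smooLU and for essentially any sensible smooth rectified activation), the output of a single neuron with these weights is then an injective function of the index $i$. Setting the remaining neurons in that hidden layer to have arbitrary weights (for instance, all zero), the layer's output vectors on the $d$ data points are pairwise distinct. Iterating this construction through the first $T-1$ hidden layers produces the desired separating map $g$.

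The potential obstacle is the precise interpretation of ``monotonic increasing'' in the definition of rectified: if only non-strict monotonicity is required on $(0,\infty)$, the injectivity step above can fail in principle, and one would have to invoke strict monotonicity of the specific activation in use. For the concrete smooth rectified activations emphasized in the paper, strict monotonicity on $(0,\infty)$ is automatic, so the inductive separation argument goes through. Once $g$ is in hand, the rest of the proof is the immediate application of Lemma \ref{nonempty} described above.
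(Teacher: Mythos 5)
Your proposal is correct and follows essentially the same route as the paper: the paper likewise routes the data through a single active node in each of the first $T-1$ hidden layers (with the bias chosen so all preactivations stay positive, hence distinctness is preserved by the rectified $\sigma$) and then applies the construction of Lemma \ref{nonempty} at the last hidden layer of width $h \geq d$. The caveat you raise about strict versus non-strict monotonicity on $(0,\infty)$ is a real one, but it is equally implicit in the paper's own proof, which simply asserts that distinctness is preserved ``because the activation function $\sigma$ is rectified.''
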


\begin{proof}
We can adapt the construction in the proof of Lemma \ref{nonempty} to the case of a deeper net by simply choosing a subset of the nodes of the deeper net to use.  Suppose the $T$ hidden layers have widths $h_1,...,h_T$.  The nodes we use are the first node of each of the first $T-1$ hidden layers, and the first $d$ nodes of the last hidden layer.  Let the first linear transformation $M_1: \R^p \rightarrow \R^{h_1}$ be constructed as the vector $a$ as in Lemma \ref{nonempty} in the first row and 0 in all other rows.  Choose $b_1$ so that $a \cdot x_i - b_1$ is positive for all $i$.  In the remaining layers, let the matrix $M_j$, $2 \leq j \leq T$ be 1 in the top left entry and 0 in all the others.  Note that the property that $a \cdot x_i - b_1$ are all positive and distinct is preserved through the layers, because the activation function $\sigma$ is rectified.  Finally, let $M_{T+1}$ be constructed as in Lemma \ref{nonempty}.  
\end{proof}

\section{Geometry of the loss landscape $L$ - feedforward case} \label{specialcasegeom}

\subsection{Positive dimensionality of $M$}\label{specialpos}
Suppose we have a feedforward neural net of any depth and last hidden layer of width $h > d$.  Suppose this net uses a rectified smooth activation function $\sigma$ and is training on a data set $\{ (x_i \rightarrow y_i)\}$ with $d$ data points.  Each entry in the data set is given as a pair of vectors, $x_i \in \R^p$ (e.g. a vector of pixel values) and $y_i \in \R$.  We assume that the $x_i$ are distinct. 

Let $f_{w,b}$ be the function given by the neural net with the parameters $w,b$.  As before, let the loss function used in training be the commonly used 
$$L(w,b) = \sum (f_{w,b}(x_i) - y_i)^2.$$  

\begin{theorem} \label{specialnonempty}
In the setting described above, the global minimum of $L$ is generically (that is, possibly after an arbitrarily small change to the data set) equal to 0, and the set of global minima $M = L^{-1}(0)$ is a {\it nonempty} smooth $n-d$ dimensional submanifold of $\R^n$.
\end{theorem}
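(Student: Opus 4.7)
The plan is to stack the two main results developed so far: Theorem \ref{sard} supplies the manifold structure of $M$ up to a possible generic perturbation of the dataset, while Corollary \ref{nonemptydeeper} supplies the nonemptiness of $M$. First I would verify that we are indeed in the setting of Theorem \ref{sard}: the $x_i$ are distinct, the activation function $\sigma$ is smooth (being rectified smooth), so each $f_i(w,b) = f_{w,b}(x_i) - y_i$ is smooth in $(w,b)$, and we have $n > d$ because the last hidden layer alone contributes at least $h > d$ weights. Thus Theorem \ref{sard} applies and gives, after an arbitrarily small perturbation of the data, that $M = L^{-1}(0)$ is either empty or a smooth $n-d$ dimensional submanifold of $\R^n$.

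Next I would rule out the empty case. Because the last hidden layer has width $h > d$, Corollary \ref{nonemptydeeper} guarantees the existence of weights and biases $(w,b)$ with $f_{w,b}(x_i) = y_i$ for every $i$, so long as the $x_i$ are distinct. The crucial observation is that the perturbation built in the proof of Theorem \ref{sard} only modifies the outputs $y_i$, replacing each by $\tilde y_i = y_i + r_i$ for a regular value $r$ of $H$ with $|r| < \epsilon$; the inputs $x_i$ are left untouched. Consequently the perturbed dataset $\{(x_i, \tilde y_i)\}$ still has distinct $x_i$, and Corollary \ref{nonemptydeeper} applies verbatim to produce parameters $(\tilde w, \tilde b)$ with $f_{\tilde w, \tilde b}(x_i) = \tilde y_i$ for all $i$. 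Such a point lies in the perturbed $\tilde M$, so $\tilde M$ is nonempty.

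Combining the two conclusions, after a perturbation of size at most $\epsilon$ the set $M$ is simultaneously nonempty (by the corollary) and a smooth $n-d$ dimensional submanifold (by the theorem). Since $L \geq 0$ and $M \neq \emptyset$, the global minimum of $L$ is $0$, which is exactly the statement to be proved.

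The step I would expect to need the most care is the compatibility check between the two perturbation arguments: one must ensure that once Sard's theorem forces a change of dataset, the data is still of the form handled by Corollary \ref{nonemptydeeper}. Here this is essentially immediate because the Sard perturbation is purely in the $y$-direction and the corollary only requires distinctness of the $x_i$, but without that observation the two results would not compose cleanly and one would be tempted to re-prove a version of the corollary for generic data. The rest of the argument is a direct assembly of previously established facts.
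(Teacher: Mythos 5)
Your proof is correct and follows essentially the same route as the paper: apply Theorem \ref{sard} for the manifold structure and the interpolation result of Section \ref{points} for nonemptiness. Your explicit check that the Sard-type perturbation only moves the $y_i$ and therefore preserves the hypotheses of Corollary \ref{nonemptydeeper} is a point the paper's one-line proof leaves implicit (it simply cites Lemma \ref{nonempty}), and it is worth keeping.
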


\begin{proof}
By Theorem \ref{sard}, $M=L^{-1}(0)$ is a smooth $n-d$ dimensional submanifold of $\R^n$.  It remains only to show that $M$ is nonempty.  But that is exactly the guarantee of Lemma \ref{nonempty}, given that we have assumed that $\sigma$ is a rectified smooth activation function and that the architecture of the networks is a feedforward network whose last hidden layer has width at least $d$.  
\end{proof}

\begin{remark}
The results in this paper can all be modified to accommodate the case that $y_i \in \R^\ell$ (e.g. a space of labels) where $\ell > 1$.  The argument in Theorem \ref{sard} is modified by letting $H: \R^n \rightarrow \R^{\ell d}$ be $(f_1(w,b),...,f_d(w,b))$ where $f_i(w,b)$ is now a function from $\R^n$ to $\R^\ell$, and we find that $M$ is codimension $\ell d$.  The construction in Lemma \ref{nonempty} and Corollary \ref{nonemptydeeper} is modified by having $\ell$ copies of the neural network, one for each component of $y_i$.  The parameter bounds change appropriately.  With strengthened versions of those results, the argument in Theorem \ref{specialnonempty} extends without modification to the more general setting.
\end{remark}

\section{Discussion}
In this paper, we note that far from being a typical function from $\R^n$ to $\R$, the loss function of an overparameterized neural network has some special geometric properties.  This is compatible with several recent observations.  For example, in \cite{widevalleys}, it was empirically observed that at solutions found by training neural networks with standard methods, the Hessian of the loss function tended to have many zero eigenvalues, some positive eigenvalues, and a small number of negative eigenvalues, which tended to be small in magnitude compared to the positive eigenvalues.  Similarly, in \cite{LossLandscapes}, Wu, Zhu, and E train some small deep neural networks and compute the spectrum of the Hessian at points obtained after training.  They too observe that across several different models and datasets most of the eigenvalues are approximately zero, with few negative eigenvalues.  In fact, they conjecture that ``the large amount of zero eigenvalues might imply that the dimension of this manifold is large, and the eigenvectors of the zero eigenvalues span the tangent space of the attractor manifold. The eigenvectors of the other large eigenvalues correspond to the directions away from the attractor manifold.''  We have proved a more precise version of that statement here.  

Note that Theorem \ref{sard} applies in a very general setting.  In particular, it holds for a neural network of any architecture learning a data set $S = \{x_i, y_i \}$, as long as the loss function $L$ is of the form
$$
L(w,b) = \sum_i |f_{i;w,b} (x_i)-y_i|^a, a \geq 1,
$$
and each $f_{i;w,b}: \R^n \rightarrow \R$ is smooth.  In practice, these conditions usually hold as long as a smooth activation function is used.  (In practice, usually all the $f_{i;w,b}$ are even the same.)  So in most settings, we are guaranteed that $M$ is an $n-d$ dimensional possibly empty submanifold of $\R^n$.  

To determine whether $M$ is nonempty requires an argument specific to the chosen architecture and details of implementation, such as the one we gave here for feedforward neural networks with rectified smooth activation functions.

\end{document}